\title{\Large{\name{}: Scalable Differentially Private Data Generator via Private Aggregation of Teacher Discriminators}}
\author{Yunhui Long$^{1}\thanks{Equal contribution.}$ \And Boxin Wang$^{1*}$ \And Zhuolin Yang$^{1}$ \And Bhavya Kailkhura$^{2}$ \And Aston Zhang$^{1}$ \And Carl A. Gunter$^{1}$ \And Bo Li$^{1}$ 
}
\theoremstyle{definition}
\newtheorem{definition}{Definition}
\theoremstyle{plain}
\newtheorem{theorem}{Theorem}
\theoremstyle{plain}
\newtheorem{lemma}{Lemma}
\newcommand{\yh}[1]{{{#1}}}
\newcommand\loss[1]{\mathcal{L}_{#1}}
\newcommand\Range[1]{\mathrm{Range}(#1)}
\newcommand{\name}{G-PATE}
\begin{document}

\maketitle

\begin{abstract}
Recent advances in machine learning have largely benefited from the massive accessible training data.
However, large-scale data sharing has raised great privacy concerns.
In this work, we propose a novel privacy-preserving data Generative model based on the PATE framework (G-PATE), aiming to train a scalable differentially private data generator which preserves high generated data utility. Our approach leverages generative adversarial nets to generate data, combined with private aggregation among different discriminators to ensure strong privacy guarantees. 
Compared to existing approaches, \name{} significantly improves the use of privacy budgets.
In particular, we train a student data generator with an ensemble of teacher discriminators and propose a novel private gradient aggregation mechanism to ensure differential privacy on all information that flows from teacher discriminators to the student generator. In addition, with random projection and gradient discretization, the proposed gradient aggregation mechanism is able to effectively deal with high-dimensional gradient vectors. 
Theoretically, we prove that \name{} ensures differential privacy for the data generator.  Empirically, we demonstrate the superiority of \name{} over prior work through extensive experiments. We show that \name{} is the first work being able to generate high-dimensional image data with high data utility  under limited privacy budgets ($\varepsilon \le 1$). Our code is available at \url{https://github.com/AI-secure/G-PATE}.
\end{abstract}

\vspace{-2mm}
\section{Introduction}
\vspace{-2mm}
Machine learning has been applied to a wide range of applications such as face recognition~\citep{parkhi2015deep,zhang2021progressive,li2021nonlinear,li2020qeba}, autonomous driving~\citep{menze2015object}, and medical diagnoses~\citep{de2016machine, kourou2015machine}. However, most learning methods rely on the availability of large-scale training datasets containing sensitive information such as personal photos or medical records. Therefore, such sensitive datasets are often hard to be shared due to privacy concerns~\citep{zhang2020secret}. To handle this challenge, data providers sometimes release synthetic datasets produced by generative models learned on the original data. Though recent studies show that generative models such as generative adversarial networks (GAN)~\citep{goodfellow2014generative} can generate synthetic records that are indistinguishable from the original data distribution, there is no theoretical guarantee on the privacy protection.  While privacy definitions such as differential privacy ~\citep{dwork2008differential} and R\'enyi differential privacy ~\citep{mironov2017renyi} provide rigorous privacy guarantee, applying them to synthetic data generation is nontrivial. 

Recently, two approaches have been proposed to combine differential privacy with synthetic data generation: DP-GAN~\citep{xie2018differentially} and PATE-GAN~\citep{yoon2018pategan}. DP-GAN modifies GAN by training the discriminator using differentially private stochastic gradient descent. Though it achieves privacy guarantee due to the post processing property~\citep{dwork2014algorithmic} of differential privacy, DP-GAN incurs significant utility loss on the synthetic data, especially when the privacy budget is low.  In contrast, PATE-GAN trains differentially private GAN using Private Aggregation of Teacher Ensembles (PATE)~\citep{papernot2016semi}.  Specifically, it trains a set of teacher discriminators and 
a student discriminator. To ensure differential privacy, the student discriminator is only trained on records that are produced by the generator and labeled by the teacher discriminators. 
The key limitation of this approach is that it relies on the assumption that the generator would be able to generate the entire real records space to bootstrap the training process. If most of the synthetic records are labeled as fake by the teacher discriminators, the student discriminator would be trained on a biased dataset and fail to learn the true data distribution. Consequently, this trained generator would not be able to produce high-quality synthetic data. 
This problem does not exist for traditional GAN, where the discriminator is always able to provide useful information to the generator since they can access the real data records rather than the synthetic data only.

The main contribution of this paper is a new approach named G-PATE for training a differentially private data generator by combining the generative model with PATE mechanism. 
Our approach is based on the  key observation that: {\textit{It is not necessary to ensure differential privacy for the discriminator in order to train a differentially private generator}.} 
As long as we ensure differential privacy on the information flow from the discriminator to the generator, it is sufficient to guarantee the privacy property for the generator. To achieve this, we propose a private gradient aggregation mechanism to ensure differential privacy on all the information that flows from the teacher discriminators to the student generator.
The aggregation mechanism applies random projection and gradient discretization to reduce privacy budget consumed by each aggregation step and to increase model scalability.
Compared to PATE-GAN, our approach has three advantages. 
First, it improves the use of privacy budget by only applying it to the part of the model that actually needs to be released for data generation. Second, our discriminator can be trained on original data records since it does not need to satisfy differential privacy. Finally, G-PATE preserves better utility on high-dimensional data given its more efficient gradient aggregation mechanism.

Theoretically, we show that our algorithm ensures differential privacy for the generator. Empirically,
we conduct extensive experiments on the Kaggle credit dataset and image datasets. To the best of our knowledge, this is the first work that is able to scale to high-dimensional face image dataset such as CelebA while still preserving high data utility. The results show that our method significantly outperforms all baselines including DP-GAN and PATE-GAN.

\vspace{-2mm}
\section{Related Work}
\vspace{-2mm}

Differential privacy~\citep{dwork2008differential} is a notion that ensures an algorithm outputs general information about its input dataset without revealing individual information. 
So far, researchers have proposed different methods to design differentially private statistical functions and machine learning models~\citep{bassily2014private,chaudhuri2011differentially,abadi2016deep,mcsherry2007mechanism,friedman2010data}.
Private aggregation of teacher ensembles (PATE) is proposed to train a differentially private classifier using ensemble mechanisms. 
Scalable PATE~\citep{papernot2018scalable} improves the utility of PATE with a Confident-GNMax aggregator that only returns a result if it has high confidence in the consensus among teachers. However, PATE and Scalable PATE are only applicable to categorical data (i.e., class labels) as shown in prior work.
Differentially private data generative models have also been proposed. Priview~\citep{qardaji2014priview} generates synthetic data based on marginal distributions of the original dataset, PrivBayes~\citep{zhang2017privbayes} trains a differentially private Bayesian network, and MWEM~\citep{hardt2010simple} uses the multiplicative weights framework to maintain and improve a distribution approximating a given data set with respect to a set of counting queries. However, these approaches are not suitable for image datasets since the statistics they use cannot well preserve the correlations between pixels in an image.

Some recent work applies differential privacy to the training of GAN. DP-GAN~\citep{xie2018differentially} and DP-CGAN~\citep{torkzadehmahani2019dp} add Gaussian noise to the gradients of the discriminators during the training process.  GS-WGAN~\citep{chen2020gs} reduces gradient sensitivity using the Wassertein distance and uses gradient sanitization to ensure differential privacy for the generator. PATE-GAN~\citep{yoon2018pategan} trains a student discriminator using an ensemble of teacher discriminators. 
\yh{DP-MERF\citep{harder2021dp} and PEARL~\citep{liew2021pearl} use differentially private embedding to train generative models on the embedding space. Concretely, DP-MERF uses random feature representations of kernel mean embeddings, and PEARL improves upon DP-MERF by incorporating characteristic function that improves the generator’s learning capability. Both DP-MERF and PEARL focus on generating a private embedding space on which the distance metric between synthetic and real data is computed (though there are no results reported for high-dimensional images). On the contrary, G-PATE focuses on generating DP high-dimensional data by improving the model structure and the private gradient aggregation step, which is orthogonal to the embedding space optimization approaches as DP-MERF and PEARL.} 

\vspace{-2mm}
\section{Preliminaries}
\vspace{-2mm}

\paragraph{Differential Privacy.}
Differential privacy bounds the shift in the output distribution of a randomized algorithm that could be caused by a small input perturbation. The following definition formally describes this privacy guarantee. 
\begin{definition}[$(\varepsilon, \delta)$-Differential Privacy]
A randomized algorithm $\mathcal{M}$ with domain $\mathbb{N}^{|\mathcal{X}|}$ is $(\varepsilon,\delta)$-differentially private if for all $\mathcal{S} \subseteq \Range{\mathcal{M}}$ 
and for any neighboring datasets $D$ and $D'$, we have
$\Pr[\mathcal{M}(D) \in \mathcal{S}] \leq \exp(\varepsilon)\Pr[\mathcal{M(D')\in \mathcal{S}}] + \delta.$
\end{definition}

\paragraph{R\'enyi Differential Privacy.}

R\'enyi differential privacy is a natural relaxation of differential privacy. Defined below, its privacy guarantee is expressed in terms of R\'enyi divergence. 
\begin{definition}[$(\lambda, \varepsilon)$-RDP]
A randomized mechanism $\mathcal{M}$ is said to guarantee $(\lambda, \varepsilon)$-RDP with $\lambda > 1$ if for any neighboring datasets $D$ and $D'$,

\begin{align*}
    & D_{\lambda}\left(\mathcal{M}(D) \| \mathcal{M}\left(D^{\prime}\right)\right) = \\ 
    & \frac{1}{\lambda-1} \log \mathbb{E}_{x \sim \mathcal{M}(D)}\left[\left(\frac{\mathbf{P} \mathbf{r}[\mathcal{M}(D)=x]}{\mathbf{P r}\left[\mathcal{M}\left(D^{\prime}\right)=x\right]}\right)^{\lambda-1}\right] \leq \varepsilon.
\end{align*}
\end{definition}

$(\lambda, \varepsilon)$-RDP implies $(\varepsilon_\delta, \delta)$-differential privacy for any given probability $\delta>0$.
\begin{theorem}[From RDP to DP]
\label{theorem:rdp-dp}
If a mechanism $\mathcal{M}$ guarantees $(\lambda, \varepsilon)$-RDP, then $\mathcal{M}$ guarantees $(\varepsilon + \frac{\log 1/\delta}{\lambda-1}, \delta)$-differential privacy for any $\delta \in (0,1)$.
\end{theorem}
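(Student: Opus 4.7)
The plan is to trace every channel through which the sensitive dataset $\mathcal{D}$ influences the final generator $G$, show that each such channel is routed through a Confident-GNMax call, and then close the argument with RDP composition, the RDP-to-DP conversion (Theorem~\ref{theorem:rdp-dp}), and post-processing. Fix neighboring datasets $\mathcal{D} \sim \mathcal{D}'$ differing in a single record $r$. Because the partition $d_1,\dots,d_n$ is disjoint, $r$ lies in exactly one subset $d_{i^\star}$, so only teacher $D_{i^\star}$ sees a changed training sample. In a given iteration of Algorithm~\ref{algo:trainGen}, the student generator $G$ never touches $\mathcal{D}$ directly; the only data-dependent quantities passed into $G$'s update are the aggregated perturbations $\Delta x_j = \texttt{DPGradAgg}(\Delta x_j^{(1)},\dots,\Delta x_j^{(n)})$ for $j=1,\dots,m$. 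Inside \texttt{DPGradAgg} (Algorithm~\ref{algo:gradAgg}), the random projection matrix $R$ is drawn independently of $\mathcal{D}$, so projection contributes no privacy cost, and clipping plus histogramming are deterministic post-processings of the teacher votes. Hence each coordinate of $\Delta u$ is the output of a Confident-GNMax call on a histogram $h$ in which the neighboring datasets differ in at most one vote.

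Next I would invoke the RDP guarantee for Confident-GNMax recalled in Appendix~C: for each order $\lambda \ge 1$ there is an $\varepsilon_0(\lambda;\sigma_1,\sigma_2,T)$ such that a single call is $(\lambda,\varepsilon_0)$-RDP (the noisy-threshold step is $(\lambda,\varepsilon_{\mathrm{thr}})$-RDP by the Gaussian mechanism, and conditionally on passing the threshold the $\arg\max$ step is $(\lambda,\varepsilon_{\mathrm{max}})$-RDP; returning $\bot$ is privacy-free as it reveals only the thresholded comparison). Then I would apply Theorem~\ref{theorem:rdpc} to compose: with $k$ projected dimensions per aggregation, $m$ samples per batch, and $T$ total training iterations, the composite release of all $\Delta u$'s is $(\lambda, kmT\cdot\varepsilon_0(\lambda))$-RDP. (A sharper data-dependent bound is possible by noting that $\bot$-returning calls contribute nothing, but the above loose bound already suffices to establish the existence claim.)

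Having a finite RDP bound, Theorem~\ref{theorem:rdp-dp} converts it to $(\varepsilon,\delta)$-DP for
\[
  \varepsilon \;=\; kmT\cdot\varepsilon_0(\lambda) + \frac{\log(1/\delta)}{\lambda-1},
\]
which is finite for any $\delta\in(0,1)$ upon choosing any $\lambda>1$; this is the $\varepsilon$ claimed to exist in the theorem. Finally, post-processing~\cite{dwork2014algorithmic} finishes the argument: the student generator's weight trajectory is computed solely from the sequence of aggregated perturbations $\{\Delta x_j\}$ (plus the data-independent noise samples $z_j$, the projection matrices $R$, and $G$'s internal randomness), so the trained map $G$ is a post-processing of an $(\varepsilon,\delta)$-DP mechanism. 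Consequently, for every fixed $z\in Z$, the synthetic record $G(z)$ is $(\varepsilon,\delta)$-DP with respect to $\mathcal{D}$, satisfying the definition of a differentially private data generator.

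The main obstacle I anticipate is rigorously justifying that the gradient-voting step actually fits the Confident-GNMax model as stated: I must argue that changing the single record $r$ changes the vote vector $h$ by at most one unit in $\ell_1$ (so the mechanism's sensitivity matches the original scalable-PATE analysis), and that the $k$ per-dimension votes from the same teacher can be treated via composition rather than requiring a more delicate joint-sensitivity analysis. Handling this carefully — in particular, observing that a single teacher may contribute to all $k$ coordinates of $\Delta u$ within one aggregation, so the $k$-fold composition inside one batch is necessary but valid — is the technical crux; everything else is bookkeeping with Theorems~\ref{theorem:rdpc} and~\ref{theorem:rdp-dp} and post-processing.
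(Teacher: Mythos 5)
There is a fundamental mismatch here: your proposal does not prove the stated theorem. The statement in question is the generic RDP-to-DP conversion of Mironov (2017) --- a property of an \emph{arbitrary} mechanism $\mathcal{M}$ satisfying $(\lambda,\varepsilon)$-RDP --- which the paper states as a standalone tool and justifies only by citation. What you have written is instead a proof sketch of Theorem~\ref{theorem:dp-gen} (that the trained student generator is an $(\varepsilon,\delta)$-differentially private data generator): you trace the data flow through \texttt{DPGradAgg}, compose the Confident-GNMax calls via Theorem~\ref{theorem:rdpc}, and finish with post-processing. Worse, in the middle of that argument you explicitly \emph{invoke} Theorem~\ref{theorem:rdp-dp} to convert the accumulated RDP bound into a DP bound --- so as a proof of Theorem~\ref{theorem:rdp-dp} itself the argument is circular, and as written it establishes nothing about the conversion formula $\varepsilon + \frac{\log 1/\delta}{\lambda-1}$.

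A proof of the actual statement has to work directly from the definition of R\'enyi divergence and makes no reference to generators, teachers, or aggregators. The standard route: fix neighboring $D,D'$ with densities $p,q$ for $\mathcal{M}(D),\mathcal{M}(D')$ and an event $S$; the RDP hypothesis gives $\mathbb{E}_{q}\bigl[(p/q)^{\lambda}\bigr] \leq e^{(\lambda-1)\varepsilon}$; H\"older's inequality then yields
\begin{equation*}
\Pr[\mathcal{M}(D)\in S] \;=\; \mathbb{E}_{q}\bigl[(p/q)\mathbf{1}_{S}\bigr] \;\leq\; \mathbb{E}_{q}\bigl[(p/q)^{\lambda}\bigr]^{1/\lambda}\,\Pr[\mathcal{M}(D')\in S]^{1-1/\lambda} \;\leq\; e^{(\lambda-1)\varepsilon/\lambda}\,\Pr[\mathcal{M}(D')\in S]^{1-1/\lambda},
\end{equation*}
and a case split on whether $\Pr[\mathcal{M}(D')\in S]$ exceeds a threshold of order $\delta^{\lambda/(\lambda-1)}e^{-\varepsilon}$ produces the claimed $\bigl(\varepsilon+\frac{\log 1/\delta}{\lambda-1},\delta\bigr)$ guarantee. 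None of these ingredients appears in your proposal, so the gap is not a technical slip but a missing proof of the target claim.
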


Compared to DP, RDP supports easier composition of multiple queries and clearer privacy guarantee under Gaussian noise. Specifically, RDP could be easily composed by adding the privacy budget:
\begin{theorem}[Composition of RDP]
\label{theorem:rdpc}
If a mechanism $\mathcal{M}$ consists of a sequence of $\mathcal{M}_1, \dots, \mathcal{M}_k$ such that for any $i \in [k]$, $\mathcal{M}_i$ guarantees $(\lambda, \varepsilon_i)$-RDP, then $\mathcal{M}$ guarantees $(\lambda, \sum_{i=1}^k\varepsilon_i)$-RDP.
\end{theorem}

Suppose $f$ is a real-valued function, and the Gaussian mechanism is defined as 
$
    \mathbf{G}_{\sigma} f(D)=f(D)+\mathcal{N}\left(0, \sigma^{2}\right),
$
where $\mathcal{N}\left(0, \sigma^{2}\right)$ is normally distributed random variable with standard deviation $\sigma$ and mean 0. The Gaussian mechanism provides the following RDP guarantee:
\begin{theorem}[RDP Guarantee for Gaussian Mechanism]
\label{theorem:rdp-gaussian}
If $f$ has sensitivity 1, then the Gaussian mechanism $\mathbf{G}_{\sigma}f$ satisfies $\left(\lambda, \lambda /\left(2 \sigma^{2}\right)\right)$-RDP.
\end{theorem}

\begin{figure}
\centering
\includegraphics[width=0.75\linewidth]{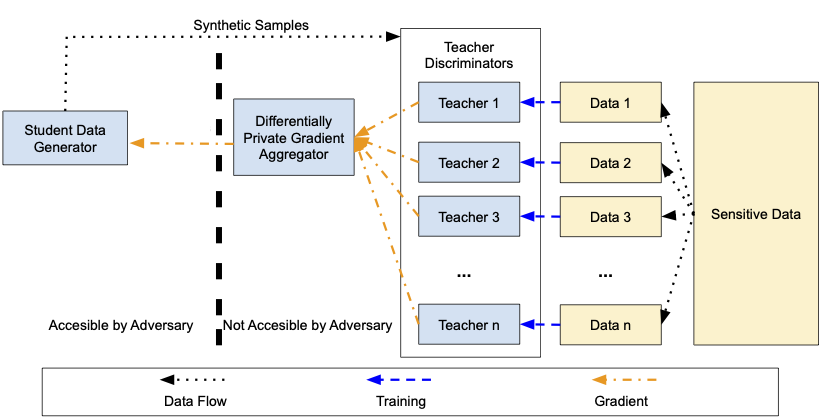}
\caption{\textbf{Model Overview of \name{}.} The model contains three parts: a student data generator, a differentially private gradient aggregator, and an ensemble of teacher discriminators.}
\label{fig:structure}
\vspace{-5mm}
\end{figure}

\vspace{-2mm}
\section{\name{}: A Scalable Data Generative Method}
\vspace{-2mm}

In this section, we present our method named \name{}. An overview of the method is shown in Figure~\ref{fig:structure}. Unlike PATE-GAN and DP-GAN, \name{} ensures differential privacy for the information flow from the discriminator to the generator. This improvement incurs less utility loss on the synthetic samples, so it can generate synthetic samples for higher dimensional and more complex datasets.

\name{} makes two major modifications on the training process of GAN. First, we replace the discriminator in GAN with an ensemble of teacher discriminators trained on disjoint subsets of the sensitive data. The teacher discriminators do not need to be published, thus can be trained using non-private algorithms. In addition, we design a gradient aggregator to collect information from teacher discriminators and combine them in a differentially private fashion. The output of the aggregator is a gradient vector that guides the student generator to improve its synthetic samples. 

Unlike PATE-GAN, \name{} does not require any student discriminator. The teacher discriminators are directly connected to the student generator. The gradient aggregator adds noise in the information flow from the teacher discriminators to the student generator to ensure differential privacy. This way, \name{}  uses privacy budget more efficiently and better approximates the real data distribution to ensure high data utility.

\begin{wrapfigure}{R}{0.55\textwidth}
\begin{minipage}{0.55\textwidth}
\vspace{-16mm}
\begin{algorithm}[H]\small
\caption{\textbf{- Training the Student Generator.}} %The student generator is jointly trained with an ensemble of teacher discriminators. In each iteration, the student generator updates its weights based on an aggregated adversarial perturbation generated by the teacher ensemble.}
\label{algo:trainGen}
\begin{algorithmic}[1]
    \STATE {\bfseries Input:} batch size $m$, number of teacher models $n$, number of training iterations $N$, gradient clipping constant $c$, number of bins $B$, projected dimension $k$, noise parameters $\sigma_1$ and $\sigma_2$, threshold $T$, disjoint subsets of sensitive data $S_1, S_2, \dots, S_n$
    \FOR{number of training iterations}
        \STATE \emph{//Phase I: Pre-Processing}
        \STATE Sample $m$ noise samples $\mathbf{z_1}, \mathbf{z_2}, \dots, \mathbf{z_m}$
        \STATE Generate fake samples $G(\mathbf{z_1}), G(\mathbf{z_2}), \dots, G(\mathbf{z_m})$ 
        \FOR{each synthetic image $G(\mathbf{z_j})$} 
           \STATE  \emph{//Phase II: Private computation and aggregation}
            \FOR{each teacher model $i$}
                \STATE Sample $m$ data samples from $S_i$
                \STATE Update the teacher discriminator $D_i$ % by descending its stochastic gradient on $\loss{D_i}$ on both fake samples and real samples
                \STATE Calculate the \yh{gradients}  $\Delta \mathbf{x_j^{(i)}}$ %as $G(\mathbf{z_j})$'s gradients on $\loss{D_i}\left(G\left(\mathbf{z_j}\right)\right)$
            \ENDFOR
            \STATE $\Delta\mathbf{X_j} \leftarrow \left(\Delta\mathbf{x_j^{(1)}}; \Delta\mathbf{x_j^{(2)}}; \dots; \Delta\mathbf{x_j^{(n)}} \right)$
            \STATE $\Delta \mathbf{x_j^{\rm priv}} \leftarrow \texttt{DPGradAgg}\left(\Delta\mathbf{X_j}, c, B, k, \sigma_1, \sigma_2, T \right)$ % \emph{\qquad // Differentially Private Gradient Aggregation}
            \STATE \emph{//Phase III: Post-Processing}
            \STATE $\mathbf{\hat{x}_j} \leftarrow G(\mathbf{z_j}) + \Delta \mathbf{x_j^{\rm priv}}$
        \ENDFOR
        \STATE Update the student generator $G$ by descending its stochastic gradient on $\loss{G}$ on $\left(\mathbf{\hat{x}_1}, \mathbf{\hat{x}_2}, \dots, \mathbf{\hat{x}_m}\right)$
    \ENDFOR
\end{algorithmic}
\end{algorithm}
\vspace{-10mm}
\end{minipage}
\end{wrapfigure}

\subsection{Training the Student Generator}

To achieve better privacy budget efficiency,  \name{} only ensures differential privacy for the generator and allows the discriminators to learn private information. 

To ease privacy analysis, we decompose \name{} into three parts: the teacher discriminators, the student generator, and the gradient aggregator. To prevent the propagation of private information, the student generator does not have direct access to any information in any of the teacher discriminators. Consequently, we cannot train the student generator by ascending its gradient based on loss of the discriminators.  
{To solve this problem, we calculate the backpropagated gradients on the fake record $x$
by ascending $x$'s gradients on the loss of the discriminator. This gradient vector can be viewed as an adversarial perturbation on $x$ that would cause the discriminator's loss on $x$ to increase. Therefore, adding the gradients to the generated fake record would teach the student generator how to improve the fake record.} 
In each training iteration, the student generator is updated in three steps: 
(1) A teacher discriminator generates \yh{the backpropagated gradients} for each record produced by the student generator. 
(2) The gradient aggregator takes the \yh{gradients} from all teacher models and generates a differentially private aggregation of them. 
(3) The student generator updates its weights based on the privately aggregated \yh{gradients}. The process is formally presented in Algorithm~\ref{algo:trainGen}.

\yh{\textbf{Backpropagating Gradients in the Discriminator.}} 
Let $D$ be a teacher discriminator. Given a fake record $x$, we use $\loss{D}(x)$ to represent $D$'s loss on $x$. In each training iteration, the weights of $D$ are updated by descending their stochastic gradients on $\loss{D}$. 

For each input fake record $x$, we generate \yh{a gradient vector} $\Delta x$ that guides the student generator on improving its output. By applying the perturbation on its output, the student generator would get an improved fake record $\hat{x} = x + \Delta x$ on which $D$ has a  higher loss. Therefore, $\Delta x$ is calculated as $x$'s gradients on $\loss{D}$:
\begin{equation}
\label{eq:delta_x}
    \Delta x = \left. \frac{\partial \loss{D}(a)}{\partial a} \right |_{a=x}.
\end{equation}
With \yh{the gradient vector} $\Delta x$, the student generator can be trained without direct access to the discriminator's loss. 

\textbf{Updating the Student Generator.}  A student generator $G$ learns to map a random input $z$ to a fake record $x = G(z)$ so that $x$ is indistinguishable from a real record by $D$. Given \yh{the gradient vector} $\Delta x$, the teacher discriminators have higher loss on the perturbed fake record $\hat{x} = x + \Delta x$ compared to the original fake record $x$. Therefore, the student generator learns to improve its fake records by minimizing the mean squared error (MSE) between its output $G(z)$ and the perturbed fake record $\hat{x}$. 
\begin{equation}
\label{eq:lg}
    \loss{G}(z,\hat{x}) = \frac{1}{k} \sum_{i=1}^{k} (G(z_i) - \hat{x}_i)^2,
\end{equation}
where $k$ is the number of synthetic records generated per training iteration.
To ensure differential privacy, instead of receiving the \yh{gradient vector} from a single discriminator, we train the student generator using a differentially private gradient aggregator that combines \yh{gradient vectors} from multiple teacher discriminators. Details are provided in Section \ref{sec:grad_agg}. 

\subsection{Differentially Private Gradient Aggregation for \name{}}

\label{sec:grad_agg}
\begin{wrapfigure}{R}{0.52\textwidth}
% \begin{figure}[H]
\begin{minipage}{0.52\textwidth}
\vspace{-8mm}
\begin{algorithm}[H] \small
\caption{\textbf{- Differentially Private Gradient Aggregator (\texttt{DPGradAgg}).} This algorithm takes a list of gradient vectors and returns a differentially private aggregation.}
\label{algo:gradAgg}
\begin{algorithmic}[1]
    \STATE {\bfseries Input:} Concatenated gradient vectors from each teacher model $\Delta\mathbf{X}=(\Delta \mathbf{x^{(1)}}, \dots, \Delta \mathbf{x^{(n)}})$, gradient clipping constant $c$, number of bins $B$, projected dimension $k$, noise parameters $\sigma_1$ and $\sigma_2$, threshold $T$
    \STATE $k_{0} \leftarrow$ the dimension of $\Delta \mathbf{x^{(1)}}$
    \STATE $\mathbf{R} \leftarrow$ a $k_0 \times k$ random projection matrix with each component randomly drawn from $\mathcal{N}(0,\frac{1}{k})$
    \STATE $\Delta \mathbf{U} \leftarrow \Delta \mathbf{X}\mathbf{R}$
    \FOR{each column $\mathbf{u_j}$ of $\Delta \mathbf{U}$}
        \STATE Clip $\mathbf{u_j}$ to $(-c,c)$
        \STATE $h \leftarrow$ the histogram of $\mathbf{u_j}$ with $B$ bins of width $\frac{2c}{B}$ 
        \STATE $\Delta u_j^{\rm priv} \leftarrow \texttt{Confident-GNMax}(h, \sigma_1, \sigma_2, T)$
    \ENDFOR
    \STATE $\Delta \mathbf{u}^{\rm priv} \leftarrow (\Delta u_1^{\rm priv}; \dots; \Delta u_k^{\rm priv})$
    \STATE $\Delta \mathbf{x}^{\rm priv} \leftarrow \Delta \mathbf{u}^{\rm priv} \mathbf{R}^\intercal$
    \STATE {\bfseries Output:} $\Delta \mathbf{x}^{\rm priv}$
\end{algorithmic}
\end{algorithm}
\vspace{-10mm}
\end{minipage}
% \end{figure}
  \end{wrapfigure}

\name{} consists of a student generator and an ensemble of teacher discriminators trained on disjoint subsets of the sensitive data. In each training iteration, each teacher discriminator generates \yh{a gradient vector} $\Delta x$ that guides the student generator on improving its output records. Different from traditional GAN, in \name{}, the student generator does not have access to the loss of any teacher discriminators, and the \yh{gradient vector} is the only information propagated from the teacher discriminators to the student generator. Therefore, to achieve differential privacy, it suffices to add noise during the aggregation of the \yh{gradient vectors}. 

However, the aggregators used in PATE and PATE-GAN are not suitable for aggregating gradient vectors because they are only applicable to categorical data. Therefore, we propose a differentially private gradient aggregator (\texttt{DPGradAgg}) based on PATE. With gradient discretization, we convert gradient aggregation into a voting problem and get the noisy aggregation of teachers' votes using PATE. Additionally, we use random projection to reduce the dimension of vectors on which the aggregation is performed. The combination of these two approaches allows \name{} to generate synthetic samples with higher data utility, even for large scale image datasets, which is hard to be achieved by PATE-GAN. The procedure is formally presented in Algorithm~\ref{algo:gradAgg}. 

\textbf{Gradient Discretization.} Since PATE is originally designed for aggregating the teacher models' votes on the correct class label of an example, the aggregation mechanism in PATE only applies to categorical data. Therefore, we design a three-step algorithm to apply PATE on continuous gradient vectors. First, we discretize the gradient vector by creating a histogram and mapping each element to the midpoint of the bin it belongs to. Then, instead of voting for the class labels as in PATE, a teacher discriminator votes for $k$ bins associated with $k$ elements in its gradient vector. Finally, for each dimension, we calculate the bin with most votes using the \texttt{Confident-GNMax} aggregator~\citep{papernot2018scalable} (Appendix \ref{appendix:gnmax}). The aggregated gradient vector consists of the midpoints of the selected bins.

With gradient discretization, the teacher discriminators can directly communicate with the student generator using the PATE mechanism. Since these teacher discriminators are trained on real data, they can provide much better guidance to the generator compared to the student discriminator in PATE-GAN, which is only trained on synthetic samples. Moreover, the \texttt{Confident-GNMax} aggregator ensures that the student generator would only improve its output in the direction agreed by most of the teacher discriminators.

\textbf{Random Projection.} Aggregation of high dimensional vectors is expensive in terms of privacy budget because private voting needs to be performed on each dimension of the vectors. To save privacy budget, we use random projection~\citep{bingham2001random} to reduce the dimensionality of gradient vectors. Before the aggregation, we generate a random projection matrix with each component randomly drawn from a Gaussian distribution. We then project the gradient vector into a lower dimensional space using the random projection matrix. After the aggregation, the aggregated gradient vector is projected back to its original dimensions. Since the generation of random projection matrix is data-independent. It does not consume any privacy budgets.

Random projection is shown to be especially effective on image datasets. Since different pixels of an image are often highly correlated, the intrinsic dimension of an image is usually much lower than the number of pixels~\citep{DBLP:journals/corr/abs-1803-09672}.  Therefore, random projection maximizes the amount of information a student generator can get from a single query to the \texttt{Confident-GNMax} aggregator, and makes it possible for \name{} to retain reasonable utility even on high dimensional data. 
Moreover, random projection preserves similar squared Euclidean distance between high-dimensional vectors, therefore is beneficial to privacy protection both theoretically and empirically~\citep{xu2017dppro}.

\vspace{-2mm}
\section{Privacy Guarantees}
\vspace{-2mm}

In this section, we  provide theoretical guarantees on the privacy properties for \name{}.
To start with, we propose the following definition for a differentially private data generative model.

\begin{definition}[Differentially Private Generative Model]
\label{def:dpgen}
Let $G$ be a generative model that maps a set of points $Z$ in the noise space $\mathcal{Z}$ to a set of records $X$ in the data space $\mathcal{X}$.  Let $\mathcal{D}$ be the training dataset of $G$ and $\mathcal{A}: \mathcal{D} \mapsto G$ be the training algorithm. We say that $G$ is a \emph{$(\varepsilon,\delta)$-differentially private data generative model} if the training algorithm $\mathcal{A}$ is 
$(\varepsilon,\delta)$-differentially private. 
\end{definition}

Definition~\ref{def:dpgen} relaxes the definition of a DP-GAN by focusing the protection only on the generative model in a GAN. This relaxation saves privacy budget during training and improves the utility of the model. Moreover, the relaxation does not compromise the privacy guarantee for the synthetic data.  

\begin{lemma}
\label{theorem:dp_syn}
Let $G$ be an $(\varepsilon,\delta)$-differentially private data generative model trained on a private dataset $\mathcal{D}$. For any $Z \in \mathcal{Z}$, the synthetic dataset $X = G(Z)$ is $(\varepsilon,\delta)$-differentially private.
\end{lemma}
\textbf{Proof Sketch.} Lemma~\ref{theorem:dp_syn} is a consequence of the post-processing property of differential privacy. First, the random points $Z$ are independent of the private dataset $\mathcal{D}$. Second, one does not need to query the discriminator during the data generation process. Therefore, the synthetic dataset is generated by post-processing $G$ and 
is guaranteed to be $(\varepsilon,\delta)$-differentially private. 

Lemma~\ref{theorem:dp_syn} shows  that a differentially private generative model is able to support infinite number of queries to the data generator and can be used to generate multiple synthetic datasets. 

Next, we justify the privacy guarantee of the G-PATE method. 
The following lemma justifies RDP of the gradient aggregator (Algorithm~\ref{algo:gradAgg}). 

\begin{lemma}[R\'enyi Differential Privacy of \texttt{DPGradAgg}]
\label{lemma:dpgradagg}
The output of the gradient aggregator ({\tt DPGradAgg}) proposed in Algorithm~\ref{algo:gradAgg} satisfies $\left(\lambda, \sum_{1\leq j\leq k} \varepsilon_{j}\right)$-RDP, where $\lambda > 1$ and $\varepsilon_{j}$ is the data-dependent RDP budget with order $\lambda$ for the \texttt{Confident-GNMax} aggregator on the $j$-th projected dimension. 
\end{lemma}

\textbf{Proof Sketch.} Lemma~\ref{lemma:dpgradagg} can be proved by combining the RDP guarantee of the \texttt{Confident-GNMax} aggregator and the post-processing property of RDP. We first divide the input of \texttt{DPGradAgg} into two categories. The first category contains data independent parameters, including the gradient clipping constant $c$, the number of bins $B$, the projected dimension $k$, the noise parameters $\sigma_1$ and $\sigma_2$, and the threshold $T$. These parameters do not contain private information. The second category contains the gradient vectors $\Delta\mathbf{X}=(\Delta \mathbf{x^{(1)}}, \dots, \Delta \mathbf{x^{(n)}})$, which are data-dependent and sensitive. Our privacy analysis focuses on the computation on $\Delta\mathbf{X}$. With random projection and gradient discretization, we convert $\Delta\mathbf{X}$ into $k$ histograms and pass the histograms into the \texttt{Confident-GNMax} aggregator. Since \texttt{Confident-GNMax} satisfies data-dependent RDP~\citep{papernot2018scalable}, the privacy guarantee nicely propagates to the output of \texttt{DPGradAgg}. 

We analyze the RDP guarantee of \texttt{DPGradAgg} by composing the privacy budget consumed by the \texttt{Confident-GNMax} aggregator on each projection dimension. Therefore, the R\'enyi differential privacy budget of the training algorithm is a composition of the data-dependent R\'enyi differential privacy budget of the \texttt{Confident-GNMax} aggregator over $k$ dimensions. 
The data-dependent privacy budget for each \texttt{Confident-GNMax} aggregation is dependent on $\sigma_1$, $\sigma_2,$ and threshold $T$ (Appendix D).
The remaining parameters (e.g. gradient clipping constant $c$, number of bins $B$) do not influence the privacy guarantee. 

The next theorem justifies RDP of the G-PATE training process.

\begin{lemma}[R\'enyi Differential Privacy  of \name{}]
\label{theorem:rdp_budget}
Let $\mathcal{A}$ be the training algorithm for the student generator (Algorithm~\ref{algo:trainGen}) with $N$ training iterations and $k$ projected dimensions. The data-dependent R\'enyi differential privacy for $\mathcal{A}$ with order $\lambda>1$ is 
% \begin{equation*}
$
\varepsilon = \sum_{1\leq i\leq N}\left(\sum_{1\leq j\leq k}\varepsilon_{i,j} \right), 
$
% \end{equation*}
where $\varepsilon_{i,j}$ is the data-dependent R\'enyi differential privacy for the \texttt{Confident-GNMax} aggregator in the $i$-th iteration on the $j$-th projected dimension.
\end{lemma}

\textbf{Proof Sketch.} For the convenience of privacy analysis, we divide the each iteration in Algorithm~\ref{algo:trainGen} into three phases: pre-processing, private computation and aggregation, and post-processing. 
In the pre-processing phase, the generator produces fake samples without accessing the private data. 
In the private computation and aggregation phase, the teacher discriminators are updated based on private data. Each teacher discriminator also generates a gradient vector. These vectors are aggregated using the \texttt{DPGradAgg} algorithm. Based on Lemma~\ref{lemma:dpgradagg}, the data-dependent RDP for this phase is $\sum_{1\leq j\leq k}\varepsilon_{i,j} $. 
In the post-processing phase, the student generator is updated using the privately-aggregated gradient vector $\Delta \mathbf{x_j}^{\rm priv}$. It satisfies RDP because of the post-processing property. 
Finally, the RDP of Algorithm~\ref{algo:trainGen} is composed over $N$ training iterations. 

The next theorem provides a theoretical guarantee  on the differential privacy of  G-PATE.

\begin{theorem}[Differential Privacy  of \name{}]
\label{theorem:dp_budget}
Given a sensitive dataset $\mathcal{D}$ and parameters $0 < \delta < 1$, let $G$ be the student generator trained by Algorithm~\ref{algo:trainGen}. There exists $\varepsilon > 0$ and $\lambda > 1$ so that $G$ is a  $(\varepsilon + \frac{\log 1/\delta}{\lambda-1}, \delta)$-differentially private data generative model.
\end{theorem}

Theorem~\ref{theorem:dp_budget} is the consequence of converting the R\'enyi differential privacy guarantee in Lemma~\ref{theorem:rdp_budget} to differential privacy (Theorem~\ref{theorem:rdp-dp}). 

\vspace{-2mm}
\section{Experimental Evaluation}
\vspace{-2mm}
We evaluate \name{} against three state-of-the-art models: DP-GAN, PATE-GAN and GS-WGAN. 
We first perform comparative analysis with baselines on the tabular and image datasets used in the corresponding works, including the Kaggle credit tabular dataset and the grayscale image datasets (MNIST and Fashion-MNIST). In addition, we evaluate \name{} on the privacy-sensitive large-scale high-dimensional face dataset CelebA.

\vspace{-3mm}
\subsection{Experimental Setup}
\vspace{-1mm}

\textbf{Tabular Dataset.} we use the same Kaggle credit card fraud detection dataset~\citep{dal2015calibrating} (Kaggle Credit) as in \citep{yoon2018pategan}.
The dataset contains 284,807 samples representing transactions made by European cardholders' credit cards in September 2013, and 492 (0.2\%) of these samples are fraudulent transactions. Each sample consists of 29 continuous features from a PCA transformation on the original features.

\textbf{Image Datasets.} To demonstrate the superiority of  \name{} to PATE-GAN on high dimensional image datasets, we train \name{} on
MNIST, Fashion-MNIST~\citep{xiao2017fashion}, and the celebrity face datasets CelebA~\citep{liu2015faceattributes}. MNIST and Fashion-MNIST consist of 60,000 training examples and 10,000 testing examples. Each example is a $28 \times 28$ grayscale image, associated with a label from 10 classes. 
The CelebA dataset contains 202,599 images aligned and cropped based on the human face. We create three datasets: CelebA-Gender(S) is a binary classification dataset that uses the gender attributes as the labels and resizes the images to $32\times32\times3$; to demonstrate the scalability of \name{} we also create CelebA-Gender(L) with the same label while resizing the images to $64\times64\times3$; CelebA-Hair contains images as 64x64x3 with three hair color attributes (black/blonde/brown). We follow the official training and testing partition as \citep{liu2015faceattributes}.

\textbf{Implementation Details.} For the Kaggle Credit dataset, both the generator and discriminator networks of G-PATE are fully connected neural network with the same architecture as PATE-GAN~\citep{yoon2018pategan}. We use random projection with 5 projection dimensions during gradient aggregation.  We use the DCGAN~\citep{radford2015unsupervised} architecture on the image datasets. We set the projection dimensions to 10 during gradient aggregation. More details are provided in Appendix E.

\textbf{Evaluation Metrics. } To compare the \textit{data utility} of different data generators, we follow the standard protocol \citep{yoon2018pategan,chen2020gs} and train a classifier on the synthetic data and test it on the real data to benchmark the usefulness of the synthetic data for downstream tasks. Specifically, we report the \textbf{AUROC} of the classifiers for the binary-class tabular dataset, and the \textbf{classification accuracy} trained on CNN for the multi-class image datasets to measure the data utility. In addition, for image datasets, we also evaluate the \textit{visual quality} of the synthetic images using \textbf{Inception Score (IS)} \citep{isscore} and \textbf{Fréchet inception distance (FID)} \citep{fid}. More details can be found in Appendix \ref{appendix:params} and \ref{appendix:visual}.

\vspace{-3mm}
\subsection{Evaluation Results}
\label{sec:exp_res}
\vspace{-1mm}
\begin{table}[t] 
\centering
\small
% \vspace{-8pt}
\caption{\small \textbf{Performance Comparison on the Tabular Dataset and Image Datasets.}  We compare \name{} with DP-GAN, PATE-GAN, GS-WGAN and vanilla DC-GAN. Vanilla DC-GAN has no privacy protection. The best results are highlighted in bold. Table (a) presents AUROC of the classifier trained on synthetic data and tested on real data. The performance satisfying $(1,10^{-5})$-differential privacy is evaluated over 4 different classifiers: logistic regression (LR), AdaBoost, bagging, and multi-layer perceptron (MLP). } 
\begin{subtable}{0.39\linewidth}
\caption{\small \textbf{Data Utility (AUROC) on Kaggle Credit Tabular Dataset.} 
% The table presents AUROC of the classifier trained on synthetic data and tested on real data. % The evaluation results for PATE-GAN and DP-GAN are recorded in \cite{yoon2018pategan}. We evaluate \name{} under the same experimental setup. 
% The performance is evaluated over 4 different classifiers: logistic regression (LR), AdaBoost, bagging, and multi-layer perceptron (MLP).
% PATE-GAN, DP-GAN, and \name{} all satisfy $(1,10^{-5})$-differential privacy. Vanilla DC-GAN has no privacy protection.
%The best results out of different models are highlighted in bold.
}
\label{tab:credit_auc}
\resizebox{1.0\linewidth}{!}{
\begin{tabular}{l|l|ccc}
\toprule
&\textbf{\shortstack{DC-\\GAN}} &\textbf{\shortstack{PATE-\\GAN}} & \textbf{\shortstack{DP-\\GAN}} & \textbf{\shortstack{\;\\ \name{} \\ \;}} \\ 
\hline
\textbf{LR} &0.9430 &0.8728 &0.8720 & \textbf{0.9251}  \\ 
\textbf{AdaBoost}%~\citep{freund1996experiments}}
&0.9416 &0.8959 &0.8809 &\textbf{0.8981} \\ 
\textbf{Bagging}%~\citep{breiman1996bagging}} 
&0.9379 &0.8877 &0.8657 &\textbf{0.8964} \\ 
\textbf{MLP} &0.9444 &0.8925 &0.8787 &\textbf{0.9093} \\ 
\hline
\textbf{Average} & 0.9417 & 0.8872 &0.8743 &\textbf{0.9072} \\
\bottomrule
\end{tabular}
}
% \caption{\small {Quality evaluation of images generated by different differentially private data generative models on Image Datasets: 
% we use Inception Score (IS) to measure the visual quality of the generated data  under different $\varepsilon$ ($\delta=10^{-5}$).} %
% }
\caption{\small \textbf{Visual Quality Evaluation on Image Datasets using Inception Score (IS)}.}
\label{tab:visual}
\resizebox{\linewidth}{!}{%
\begin{tabular}{l|c|c|cccc}
\toprule
\textbf{Dataset} & \textbf{\scriptsize\shortstack{Real\\ data}}  & $\varepsilon$ & {\scriptsize\textbf{\shortstack{DP-\\GAN}}} & {\scriptsize\textbf{\shortstack{PATE-\\GAN}}} & {\scriptsize\textbf{\shortstack{GS-\\WGAN}}} & {\scriptsize\textbf{\shortstack{G-\\PATE}}}   \\ \midrule
% {\footnotesize\textbf{Dataset}} & \textbf{\scriptsize{Real\\ data}}  & $\varepsilon$ & {\scriptsize\textbf{{DP-\\GAN}}} & {\scriptsize\textbf{{PATE-\\GAN}}}  & {\scriptsize\textbf{{G-\\PATE}}} & {\scriptsize\textbf{{GS-\\WGAN}}}  \\ \midrule
\multirow{2}{*}{{\footnotesize\textbf{MNIST}}} & \multirow{2}{*}{9.86} &  $1$ & 1.00 & 1.19  & 1.00  & \textbf{3.60} \\ % \cmidrule{3-7}
                                &                       & $10$ &  1.00 & 1.46 &  \textbf{8.59}  & 5.16 \\ \midrule
\multirow{2}{*}{{\footnotesize\textbf{\shortstack{Fashion-\\MNIST}}}} & \multirow{2}{*}{9.01} & $1$ &  1.03 & 1.69  & 1.00 & \textbf{3.41} \\ % \cmidrule{3-7}
  &                       & $10$ & 1.05 & 2.35  & \textbf{5.87} & 4.33 \\ \midrule
\multirow{2}{*}{\footnotesize\textbf{CelebA}} & \multirow{2}{*}{1.88} &   $1$ &  1.00 & 1.15  & 1.00 & \textbf{1.17} \\ % \cmidrule{3-7}
                                 &                       &  $10$ & 1.00 & 1.16  & 1.00 & \textbf{1.37}   \\ 
\bottomrule
\end{tabular}
}
\end{subtable}
\;
\begin{subtable}{0.59\linewidth}
\caption{\small \textbf{Data Utility (Accuracy) on Image Datasets.} 
% We compare \name{} with DP-GAN, PATE-GAN and vanilla DC-GAN. 
The table presents the classification accuracy of CNN models trained on the generated data and tested on real data evaluated under two private settings: $ \varepsilon=10$ and $\varepsilon=1$ given $\delta=10^{-5}$. }
% \bx{MNIST and Fashion-MNIST are 10-class classification problems, while CelebA-Gender and CelebA-Hair are respectively binary  and 3-class datasets.}  
% DP-GAN, PATE-GAN and \name{} are evaluated under two private settings: $ \varepsilon=10$ and $\varepsilon=1$ given $\delta=10^{-5}$.}
\label{tab:image}
\resizebox{1.01\linewidth}{!}{
\begin{tabular}{l|c||c||cccc}
\toprule
\textbf{\scriptsize Dataset} &\textbf{\shortstack{\scriptsize DC-GAN}} & $\varepsilon$ &\textbf{\shortstack{\scriptsize DP-GAN}} & \textbf{\shortstack{\scriptsize PATE-GAN}} & \textbf{\shortstack{\scriptsize GS-WGAN}} &\textbf{\scriptsize \name{}} \\ \midrule
\multirow{2}{*}{\textbf{\scriptsize MNIST}} & \multirow{2}{*}{\shortstack{0.9653 \\  ($\varepsilon=\infty$)}} & $1$& 0.4036 & 0.4168 & 0.1432  &\textbf{0.5880}  \\ \cmidrule{3-7}
&   & $10$ & 0.8011 & 0.6667  & 0.8066 &\textbf{0.8092}  \\ \midrule
\multirow{2}{*}{\textbf{\shortstack{\scriptsize Fashion-\\ \scriptsize MNIST}}} & \multirow{2}{*}{\shortstack{0.8032 \\ ($\varepsilon=\infty$)}}& $1$ &0.1053 & 0.4222 & 0.1661 &\textbf{0.5812} \\ \cmidrule{3-7}
& & $10$ & 0.6098 & 0.6218 & 0.6579 &\textbf{0.6934}  \\ 
\midrule
\multirow{2}{*}{\textbf{\shortstack{\scriptsize CelebA-\\\scriptsize Gender(S)}}} & \multirow{2}{*}{\shortstack{0.8002 \\  ($\varepsilon=\infty$)}} & $1$& 0.5201 & 0.4448 & 0.6293 & \textbf{0.7016} \\ \cmidrule{3-7}
&   & $10$ &  0.5409 & 0.5870 & 0.6326  &\textbf{0.7072}  \\ \midrule
\multirow{2}{*}{\textbf{\shortstack{\scriptsize CelebA-\\ \scriptsize Gender(L)}}} & \multirow{2}{*}{\shortstack{0.8149 \\  ($\varepsilon=\infty$)}} & $1$& 0.5330 & 0.6068 & 0.5901 & \textbf{0.6702} \\ \cmidrule{3-7}
&   & $10$ & 0.5211  & 0.6535 & 0.6136 &\textbf{0.6897}  \\ \midrule
\multirow{2}{*}{\textbf{\shortstack{\scriptsize CelebA-\\\scriptsize Hair}}} & \multirow{2}{*}{\shortstack{0.7678 \\ ($\varepsilon=\infty$)}}& $1$ & 0.3447 & 0.3789 & 0.3375 &\textbf{0.4985} \\ \cmidrule{3-7}
& & $10$ & 0.3920 & 0.3900  & 0.3725 &\textbf{0.6217}  \\ 
\bottomrule
\end{tabular}
}
\end{subtable}
\vspace{-5mm}
\end{table}

\begin{table}[t]
\begin{minipage}{0.5\linewidth}
\caption{\small \textbf{Data Utility (Accuracy) on Image Datasets given Small Privacy Budgets.}  \name{} and baselines are evaluated following the same way as Table \ref{tab:image} given $\delta=10^{-5}$ and low $\varepsilon \leq 1.0$.}\label{tab:lowdp}
% \begin{small}
\resizebox{1.05\linewidth}{!}{
\begin{tabular}{c|c|c|c|c|c|c|c|c}
\toprule
\multirow{2}{*}{\textbf{$\varepsilon$}} & \multicolumn{4}{c|}{\textbf{MNIST}} &  \multicolumn{4}{c}{\textbf{Fashion-MNIST}} \\ 
%\cmidrule(lr){2-3} \cmidrule(lr){4-5}
\cmidrule{2-9}
 & \shortstack{DP-\\GAN} & \shortstack{PATE\\-GAN} & \shortstack{GS-\\WGAN}  & {\shortstack{\;\\ \name{} \\ \;}} & \shortstack{DP-\\GAN} & \shortstack{PATE\\-GAN}  & \shortstack{GS-\\WGAN} & {\shortstack{\;\\ \name{} \\ \;}} \\
\midrule
0.2 & 0.1104 & 0.2176 & 0.0972 & \textbf{0.2230} & 0.1021 & 0.1605 & 0.1000 & \textbf{0.1874} \\
0.4 & 0.1524 & 0.2399 & 0.1029 & \textbf{0.2478} & 0.1302 & 0.2977 & 0.1001 & \textbf{0.3020} \\
0.6 & 0.1022 & 0.3484 & 0.1044 & \textbf{0.4184} & 0.0998 & 0.3698 & 0.1144 & \textbf{0.4283} \\
0.8 & 0.3732 & 0.3571 & 0.1170 & \textbf{0.5377} & 0.1210 & 0.3659 & 0.1242 & \textbf{0.5258} \\
1.0 & 0.4046 & 0.4168 & 0.1432 & \textbf{0.5880} & 0.1053 & 0.4222 & 0.1661 & \textbf{0.5812} \\
\bottomrule
\end{tabular}
}
\end{minipage}
\quad
\begin{minipage}{0.48\linewidth}
\caption{\small \textbf{Visualization of Generated Instances by \name{}.} Row 1 (real image), row 2 ($\varepsilon=10, \delta=10^{-5}$) and row 3 ($\varepsilon=1, \delta=10^{-5}$) each presents one image from each class (the left 5 columns are MNIST images, and the right 5 columns are Fashion-MNIST images).  
% When $\varepsilon=1$, \name{} does not generate high-quality images. However, it preserves partial features in the training images, so the synthetic images are useful to preserve data utility which can be seen from our quantitative results.
}\label{fig:img}
\includegraphics[width=\textwidth]{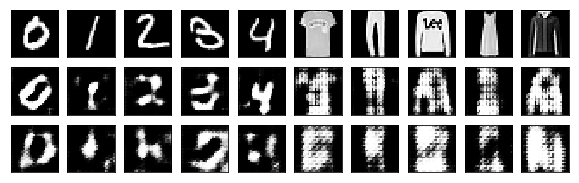}
\end{minipage}
\vspace{-8mm}
\end{table}

% \begin{table}[t]
% \centering
% \caption{\small \textbf{Performance Comparison on Image Datasets given small privacy budgets.}  DP-GAN and \name{} are both evaluated in the same way as Table \ref{tab:image} given $\delta=10^{-5}$ and low $\varepsilon \leq 1.0$.}
% \resizebox{1.0\linewidth}{!}{
% \begin{tabular}{c|c|c|c|c}
% \toprule
% \multirow{2}{*}{\textbf{$\varepsilon$}} & \multicolumn{2}{c|}{\textbf{MNIST}} &  \multicolumn{2}{c}{\textbf{Fashion-MNIST}} \\ 
% %\cmidrule(lr){2-3} \cmidrule(lr){4-5}
% % \cmidrule{2-5}
%  & DP-GAN  & \name{} & DP-GAN  & \name{} \\
% \midrule
% 0.2 & 0.1104 & \textbf{0.2230} & 0.1021 & \textbf{0.1874} \\
% 0.4 & 0.1524 & \textbf{0.2478} & 0.1302 & \textbf{0.3020} \\
% 0.6 & 0.1022 & \textbf{0.4184} & 0.0998 & \textbf{0.4283} \\
% 0.8 & 0.3732 & \textbf{0.5377} & 0.1210 & \textbf{0.5258} \\
% 1.0 & 0.4046 & \textbf{0.5801} & 0.1053 & \textbf{0.5567} \\
% \bottomrule
% \end{tabular}
% }
% % \vspace{5pt}
% \label{tab:lowdp}
% \vspace{-2mm}
% \end{table}

\begin{table}[t]\small
\centering
% \vspace{-2mm}
\caption{\small \textbf{Analysis on the Hyper-parameters.} We performed comprehensive studies on the hyper-parameters of \name{} (the number of teachers, the projection dimensions, gradient clipping constant $c$, and the number of bins $B$)  for MNIST and Fashion-MNIST with $\varepsilon=1$ and $\delta=10^{-5}$. ``N/A'' means ``no projections''.} % The model has the best performance with 4000 teacher models and projection dimension equals to 10. .}
\resizebox{1.0\linewidth}{!}{
\begin{tabular}{c|cccc|ccc|cccc|ccc}
\toprule
\multirow{2}{*}{}  & \multicolumn{4}{c|}{\textbf{Projection Dimensions $k$}} & \multicolumn{3}{c|}{\textbf{\# of Teachers $n$}} & \multicolumn{4}{c|}{\textbf{Gradient Clipping Constant} $c$} & \multicolumn{3}{c}{\textbf{\# of bins $B$ }} \\ \midrule
& 5    & \textbf{10} & 20  & N/A  & 2000  & 3000  & \textbf{4000} & 5e-4 & 1e-4 & 5e-5 & 1e-5 & 5 & 10 & 20 \\ \midrule
\textbf{\scriptsize MNIST} & 0.4638  & 
\textbf{0.5880}  & 0.5604 & 0.1141 & 0.4240  & 0.5218  & \textbf{0.5880} &  0.4754  & 
\textbf{0.5880}  & 0.5505 & 0.4668 & \textbf{0.5880} & 0.5810 & 0.4706   \\ 
\midrule
\textbf{\scriptsize Fashion} & 0.5129 & \textbf{0.5812} & 0.5172 & 0.1268 & 0.3997 & 0.4874 & \textbf{0.5812} & 0.5140 & 0.5567 & \textbf{0.5812} & 0.5339 & 0.5575 & \textbf{0.5812} & {0.5400} \\ 
\bottomrule
\end{tabular}
}
\vspace{-6mm}
\label{tab:projection_teachers}
\end{table}

\textbf{Kaggle Credit.}
The Kaggle Credit dataset is highly unbalanced, {so we take a two-step approach to generate the unbalanced synthetic data.}
{In the first step, we calculate a differentially private estimation of the class distribution in the training dataset using the Laplacian mechanism~\citep{dwork2014algorithmic} with $\varepsilon=0.01$. In the second step, we train a a $(0.99,10^{-5})$-differentially private data generator and use it to generate data that follow the estimated class distribution. }
By the composition theorem of differential privacy~\citep{dwork2014algorithmic}, the data generation mechanism is $(1,10^{-5})$-differentially private. 

To compare with  PATE-GAN, we select 4 commonly used classifiers evaluated in \citep{yoon2018pategan} and report the \textbf{AUROC} of the 4 classifiers trained on the corresponding synthetic data. We evaluate \name{} under the same experimental setup as PATE-GAN for $\varepsilon=1$.
The results for baselines are following \citep{yoon2018pategan}, and we obtain a higher baseline performance for DC-GAN compared to the results reported in \citep{yoon2018pategan}.

Table~\ref{tab:credit_auc} presents the data utility analysis based on AUROC between \name{} and PATE-GAN on Kaggle Credit dataset. \name{} outperforms both PATE-GAN and DP-GAN and is close to the vanilla DC-GAN which has no privacy protection. The high performance of \name{} is partly due to the relatively low dimensionality  of the Kaggle Credit dataset and the abundance of training examples. 
More experimental results on Kaggle Credit dataset are presented in Appendix A.

\textbf{Image Datsets.} To understand \name{}'s performance on image datasets, we evaluate the \textit{data utility} and \textit{visual quality} of the generated images with \name{}, PATE-GAN,  DP-GAN, and GS-WGAN on the MNIST, Fashion-MNIST, and CelebA datasets.  The analysis is performed under two private settings: $\varepsilon=1, \delta=10^{-5}$ and $\varepsilon=10, \delta=10^{-5}$.  

For the data utility, We report the \textbf{classification accuracy} of the synthetic data in Table~\ref{tab:image}. 
\name{} outperforms baselines under both settings, and there is a more significant improvement for the setting with a stronger privacy guarantee (i.e., $\varepsilon=1$). Specifically, we observe that on the Fashion-MNIST dataset the synthetic records generated by DP-GAN under this setting are close to random noise, while the model trained on \name{} generated data retains an accuracy of 58.12\%. 

To demonstrate the scalability of our algorithm, we also conduct experiments on the high-dimensional face dateset CelebA. The synthetic data generated by \name{} is highly utility-preserving, while DP-GAN can barely converge given the high-dimensionality of the data. Particularly, even with the strict privacy budget $\varepsilon = 1$, the accuracy of the generated data by \name{} on CelebA-Gender(S) is only around $10\%$ lower than the vanilla DC-GAN. Moreover, although the dimensionality of CelebA-Gender(L) is $4\times$ larger than CelebA-Gender(S), the accuracy of both datasets is very close, which again demonstrates the scalability of \name{}. 

To better understand different generative models, we evaluate the visual quality of the generated image data, though it is not the main focus of the differentially private data generator.
In particular, we visualize the generated DP images in \Cref{fig:img}.  We also provide an quantitative analysis based on \textbf{Inception Score} for G-PATE and baselines in Table \ref{tab:visual}. \name{} can consistently generate better images than baselines when $\varepsilon=1$, for which GS-WGAN does not converge, which demonstrates the superiority of \name{}. When $\varepsilon=10$, \name{} 
achieves the best performance on the high-dimensional face dataset CelebA. {Although GS-WGAN has better visual quality on MNIST and Fashion-MNIST when $\varepsilon=10$, \name{} has the best data utility across different settings and datasets, which suggests that the data utility and visual quality are two orthogonal metrics, and it would be an interesting future direction to improve the visual quality.} More evaluation details and evaluation of \textbf{Fréchet inception distance (FID)} can be found in Appendix \ref{appendix:visual}.

\textbf{Analysis on the Hyper-parameters.} 
We perform comprehensive ablation studies on the number of teachers and the the projection dimensions to gain better understanding about G-PATE. As shown in Table~\ref{tab:projection_teachers}, \name{} benefits from having more teacher discriminators.
Under the same privacy guarantee, the number of noisy votes ($\sigma_1$ and $\sigma_2$) remains the same, so the output of the noisy voting algorithm is more likely to be correct, and the model would get better performance. 
However, this benefit diminishes as the training set for each teacher model gets smaller with the increasing number of teachers, and 4000 teachers have already achieved satisfiable results. 
Table~\ref{tab:projection_teachers} also demonstrates the effectiveness of the random projection method, which improves the classification accuracy by around $47\%$. With larger projection dimensions, the privacy consumption increases rapidly as we are accessing more private data. But if the projection dimension is too small, useful information can be lost during projection. We find the best trade-off when projection dimension equals to 10. {\name{} achieves better performance given samller bins ($\le 10$). With larger bins, the teachers attain a lower agreement rate, leading to worse performance. }

\textbf{Analysis under Limited Privacy Budgets.} We conduct another set of ablation studies given limited privacy budgets. From Table \ref{tab:lowdp} and \yh{Figure~\ref{fig:small_eps}}, we can observe that \name{} starts to converge even under small $\varepsilon$ on both MNIST and Fashion-MNIST datasets. The utility stably increases when the privacy budgets increase. Among different small $\varepsilon$, \name{} achieves significantly higher accuracy than baselines. In particular, the accuracy of \name{} under $\varepsilon=0.6$ is four times higher than DP-GAN, which indicates that \name{} is able to generate differentially private data with high utility under low privacy budgets. 

\begin{figure} [htp]
\begin{subfigure}[b]{0.5\textwidth}
\includegraphics[width=\textwidth]{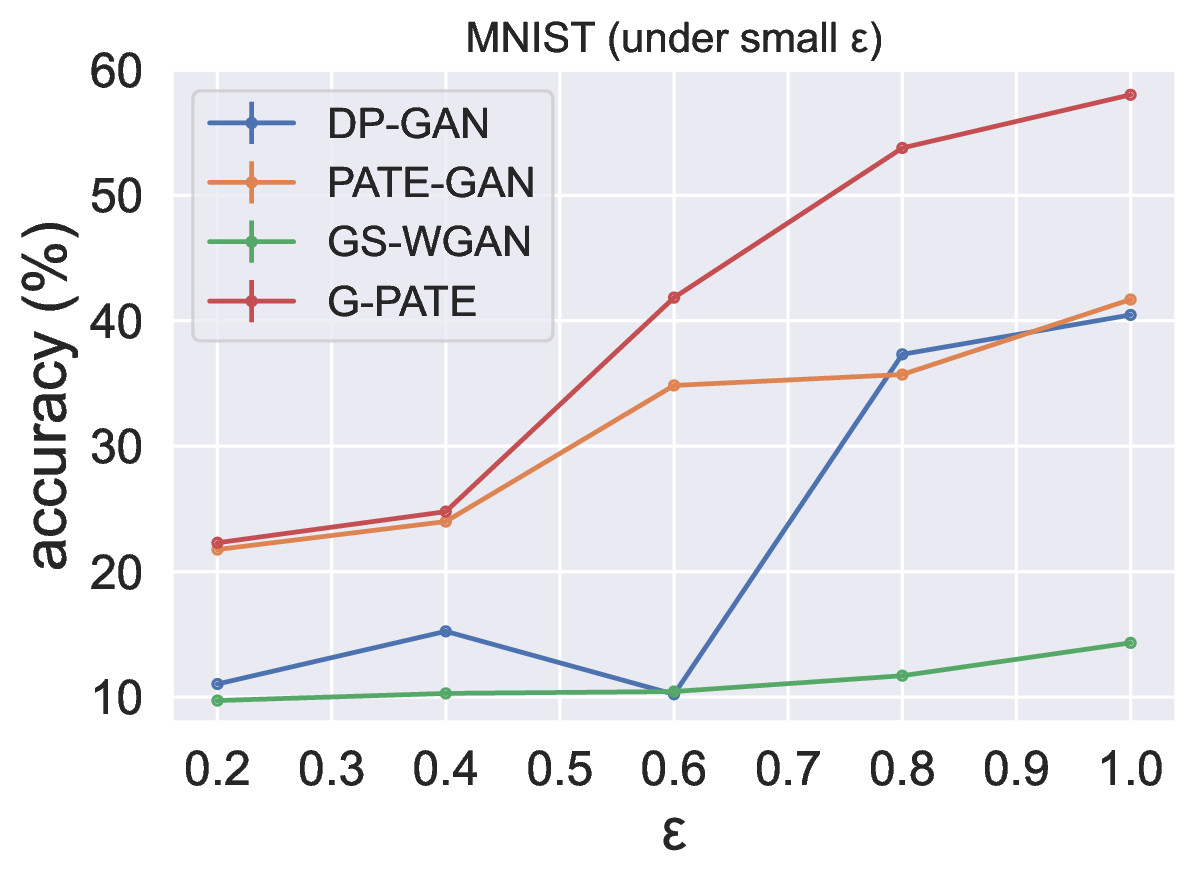}
\vspace{-7mm}
\caption{MNIST}
\end{subfigure}
\begin{subfigure}[b]{0.5\textwidth}
\includegraphics[width=\textwidth]{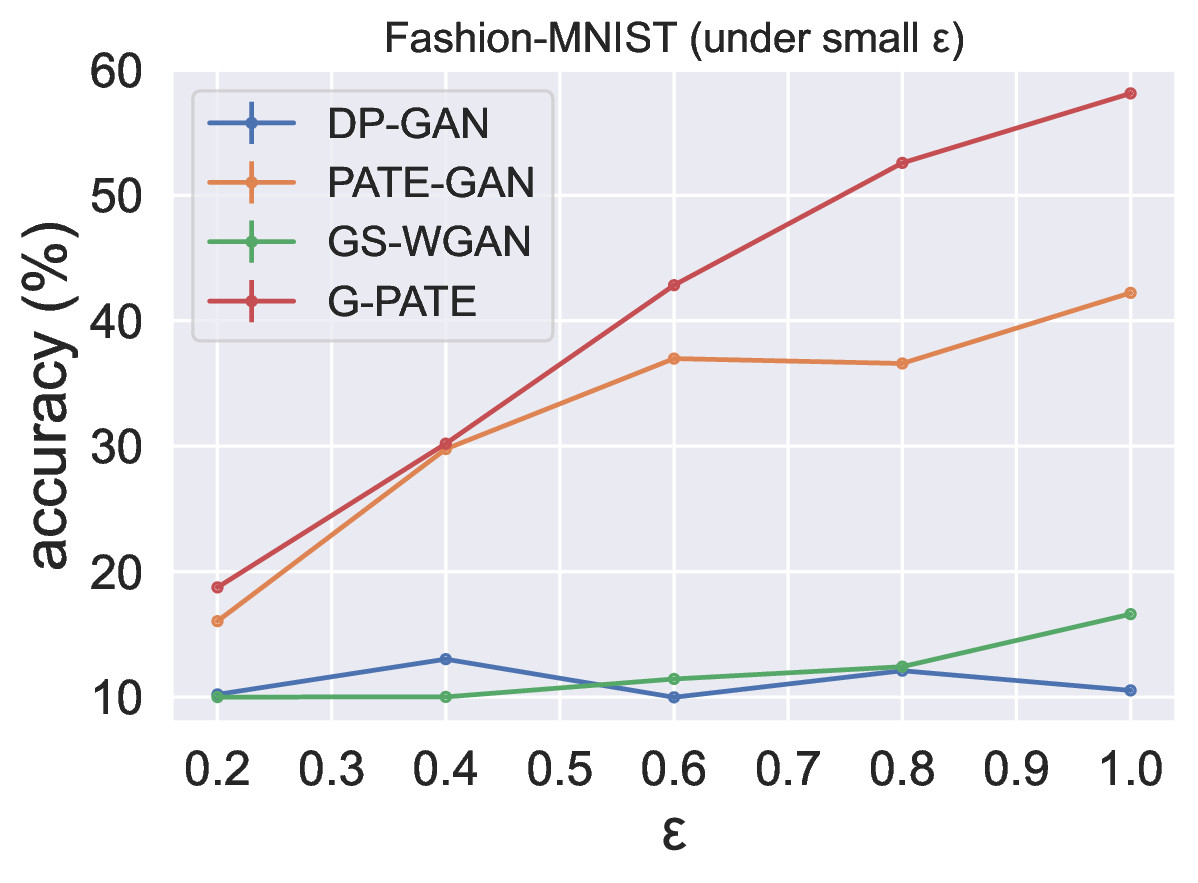}
\vspace{-7mm}
\caption{Fashion-MNIST}
\end{subfigure}
\caption{\small{\yh{\textbf{Data Utility (Accuracy) on Image Datasets given Small Privacy Budgets.} The accuracy of G-PATE and baseline models is plotted under tight privacy budget $\delta = 10^{-5}$, $\varepsilon \leq 1.0$. G-PATE consistently outperforms the baseline models even under limited privacy budgets.}}}
\label{fig:small_eps}
\vspace{-20pt}
\end{figure}

\textbf{Agreement of Teachers.} {In the gradient aggregation step, G-PATE relies on the agreement of teacher models to select the gradient direction. When there is a high agreement rate among teacher models, the gradient aggregator is more robust to noise and more likely to select a gradient direction that preserves higher utility. Intuitively, because the training partitions of teacher models come from the same dataset, we expect the teacher models to learn similar real data distribution. As a consequence, the gradients generated by the teacher models are expected to be similar. Empirically, we evaluate the agreement rate of teacher models on MNIST with 4000 teacher models and $\varepsilon=1$. On average, the gradient aggregator achieves $60.35\%$ agreement rate for teacher votes on the most agreed direction, and $39.11\%$ on the second agreed direction, which suggests that teacher models have high agreement rates on the top agreement directions. }

{\textbf{Evaluation under a Data-Independent Privacy Budget.} 
We evaluated the performance of \name{} on MNIST with a data-independent privacy analysis. We followed the same evaluation process in Table~\ref{tab:image} and replaced the privacy analysis with the data-independent privacy bound~\citep{papernot2018scalable}. When $\varepsilon=1$, the utility (i.e., classification accuracy) of \name{} on MNIST is 0.5483, outperforming the existing baselines by a large margin, which demonstrates that \name{} still achieves the highest data utility with data-independent privacy cost.
}

{\textbf{Ablation Study on the Use of PATE.} To understand how the PATE framework contributes to the advantage of G-PATE, we trained a DP-GAN model with gradient discretization and random projection applied to DPSGD on the MNIST dataset under $\varepsilon=1$ and $\varepsilon=10$. The model achieves the classification accuracy of 0.2026 ($\varepsilon=1$) and 0.5602 ($\varepsilon=10$) respectively. The results demonstrate that the PATE framework contributes significantly to G-PATE's utility advantage. 
First, with the PATE framework, G-PATE only needs to add noise to one layer of projected gradients between the teacher discriminators and the student generator, so the dimension of the noise equals the data dimension after projection. On the contrary, DP-GAN adds noise to all the gradients of the model, so the dimension of the noise equals to the model dimension. Since the data dimension is usually significantly lower than the model dimension, the PATE framework helps G-PATE to reduce the amount of noise needed to achieve the same privacy guarantee, and therefore preserves better utility. Second, in G-PATE, we have 4000 teachers to vote over the projected gradients and choose the most agreed gradient direction to update the model, which eliminates the noise from the random projection, saves privacy cost, and ensures high utility of the gradients due to the high consensus of teacher discriminators. In comparison, DP-GAN does not have teacher models, and thus the quantized gradients with random projection can contain a lot of noise during aggregation, yielding worse performance.}

\vspace{-2mm}
\section{Conclusion}
\vspace{-2mm}
We propose \name{}, a novel approach for training a differentially private data generator for high-dimensional data. \name{} is enabled by a novel differentially private gradient aggregation mechanism combined with random projection. It significantly outperforms prior work on both image and non-image datasets in terms of preserving data utility for generated datasets. 
Beyond the high utility compared with the state-of-the-art differentially private data generative models under similar setting, \name{} is also able to preserve high data utility even given small privacy budgets.

\section*{Acknowledgement}
This work was performed under the auspices of the U.S. Department of Energy by the Lawrence Livermore National Laboratory under Contract No. DE-AC52-07NA27344 and LLNL LDRD Program Project No. 20-ER-014 (LLNL-CONF-805494),
 the NSF grant No.1910100, NSF CNS 20-46726 CAR, and the Amazon Research Award.

% \newpage
\bibliography{main}
\bibliographystyle{abbrv}

\clearpage
\appendix

{
\Large\bfseries
% \mdseries
Appendix
\vspace{0.5em}
}
\setcounter{section}{0}
\setcounter{page}{1}

\appendix
% \section{Appendix}
% \input{appendix/algo.tex}
\section{Additional Evaluation Results on Kaggle Credit Dataset}
\label{appendix:kaggle}
In addition to AUROC, we also evaluate the AUPRC of the classification models trained on the synthetic data produced by different generative models. Table~\ref{tab:credit_prc} presents the results. \name{} has the best performance among all the differentially private generative models. 
\begin{table}[h]
\centering
\begin{tabular}{l|l|lll}
\toprule
&\textbf{GAN} &\textbf{PATE-GAN} & \textbf{DP-GAN} & \textbf{\name{}} \\ \midrule
\textbf{Logistic Regression} &0.4069 &0.3907 &0.3923 &\textbf{0.4476}  \\ 
\textbf{AdaBoost} &0.4530 &0.4366 &0.4234 &\textbf{0.4481} \\ 
\textbf{Bagging} &0.3303 &0.3221 &0.3073 &\textbf{0.3503} \\ 
\textbf{Multi-Layer Perceptron} &0.4790 &0.4693 &0.4600 &\textbf{0.5109} \\ \midrule
\textbf{Average} &0.4173 &0.4046 &0.3958 &\textbf{0.4392} \\
\bottomrule
\end{tabular}
\vspace{5pt}
\caption{\textbf{AUPRC on Kaggle Credit Dataset.} The table presents AUPRC of classification models trained on synthetic data and tested on real data. PATE-GAN, DP-GAN, and \name{} all satisfy $(1,10^{-5})$-differential privacy. The best results among different DP generative models are bolded.}
\label{tab:credit_prc}
\end{table}

To understand the upper-bound of the classification models' performance. We train the same classification models on real data and test it on real data. The results are presented in Table~\ref{tab:credit_A}. 
\begin{table}[h]
\centering
\begin{tabular}{l|llll}
\toprule
&\textbf{LR} &\textbf{AdaBoost} & \textbf{Bagging} & \textbf{MLP} \\ \midrule
\textbf{AUROC} &0.9330  &0.9802  &0.9699  &0.9754  \\ 
\textbf{AUPRC} &0.6184  &0.7103  &0.6707  &0.8223  \\ 
\bottomrule
\end{tabular}
\vspace{5pt}
\caption{\textbf{Performance of Classification Models Trained on Real Data.} The table presents AUROC and AUPRC of classification models trained and tested on real data. These results are the upper-bounds for evaluation results on Kaggle Credit dataset.}
\label{tab:credit_A}
\end{table}
\section{Synthetic Images Generated by \name{}}

\begin{figure} [htp!]
\begin{subfigure}[b]{\textwidth}
\includegraphics[width=\textwidth]{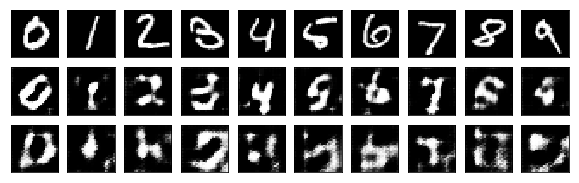}
\vspace{-7mm}
\caption{MNIST}
\end{subfigure}
\begin{subfigure}[b]{\textwidth}
\includegraphics[width=\textwidth]{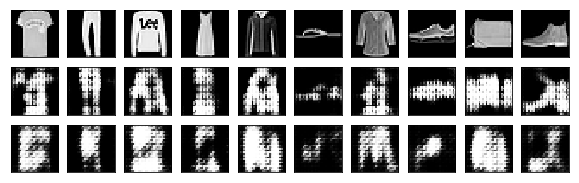}
\vspace{-7mm}
\caption{Fashion-MNIST}
\end{subfigure}
\caption{\textbf{Visualization of generated instances by \name{}.} Row 1 (real image), row 2 ($\varepsilon=10, \delta=10^{-5}$) and row 3 ($\varepsilon=1, \delta=10^{-5}$) each presents one image from each class.} 
\label{fig:syn_img}
\end{figure}

Figure~\ref{fig:syn_img} presents the synthetic images generated by \name{} on MNIST and Fashion-MNIST. Images in the same column share the same class label. Row 1 contain real images in the training dataset; row 2 contain images generated by \name{} when $\epsilon=10, \delta=10^{-5}$; and row 3 contain images generated by \name{} when $\epsilon=1, \delta=10^{-5}$.
\section{Performance Analysis on Nonprivate GPATE}
To understand how the GPATE training framework influence the performance of a GAN, we train a nonprivate GPATE with 10 teacher models. As shown in Table~\ref{tab:nonprivate_credit}, the GPATE structure has a comparable performance to the vanilla GAN.  
\begin{table}[h!]
\centering
\caption{\small \textbf{Performance Comparison between GAN and nonprivate GPATE on Kaggle Credit Dataset.}}
\label{tab:nonprivate_credit}
\begin{tabular}{l|cc}
\toprule
&\textbf{GAN} &\textbf{Nonprivate GPATE} \\ \midrule
\textbf{Logistic Regression} &0.9430 &0.9455 \\ 
\textbf{AdaBoost~\citep{freund1996experiments}} &0.9416 &0.9165 \\ 
\textbf{Bagging~\citep{breiman1996bagging}} &0.9379 &0.9456 \\ 
\textbf{Multi-layer Perceptron} &0.9444 &0.9219 \\ \midrule
\textbf{Average} & 0.9417 & 0.9324 \\
\bottomrule
\end{tabular}
% \label{tab:credit_auc}
\end{table}

% \yunhui{96.45\% for MNIST}
% \input{appendix/teachers.tex}
% \input{appendix/analysis.tex}
% \section*{Appendix C. Privacy Budget of Confident-GNMax}
\section{Privacy Budget of Confident-GNMax}
\label{appendix:gnmax}

The \texttt{Confident-GNMax} aggregator was proposed by~\cite{papernot2018scalable} to support differentially private aggregation of the votes from multiple teacher models. For the completeness of this paper, in this section, we include the algorithm for the \texttt{Confident-GNMax} aggregator and its data-dependent RDP guarantee. 

% \subsubsection{Differential Privacy and R\'enyi Differential Privacy}

\subsection{The Confident-GNMax Aggregator}
\begin{algorithm}[htp!]
\caption{\textbf{Confident-GNMax Aggregator.} The private aggregator used in the scalable PATE framework~\cite{papernot2018scalable}.}
\begin{algorithmic}[1]
    \REQUIRE input $x$, threshold $T$ , noise parameters $\sigma_1$ and $\sigma_2$
    \IF{$\max_i\{n_j(x)\} + \mathcal{N}(0, \sigma_1^2) \geq T$}
        \STATE \text{\bfseries Return:} $\arg \max \{n_j(x) + \mathcal{N}(0, \sigma_2^2)\}$
    \ELSE
        \STATE \text{\bfseries Return:} $\bot$
    \ENDIF
\end{algorithmic}
\label{algo:gnmax}
\end{algorithm}

Algorithm~\ref{algo:gnmax} presents the \texttt{Confident-GNMax} aggregator proposed by~\cite{papernot2018scalable}. The algorithm contains two steps. First, it computes the noisy maximum vote
\begin{equation*}
    M_1 = \max_i\{n_j(x)\} + \mathcal{N}(0, \sigma_1^2).
\end{equation*}
Then, if the noisy maximum vote is greater than a given threshold, it uses the GNMax mechanism to select the output with most votes:
\begin{equation*}
    M_2 = \arg\max\{n_j(x) + \mathcal{N}(0, \sigma_2^2)\}.
\end{equation*}

Since each teacher model may cause the maximum number of vote to change at most by 1, $M_1$ is equivalent to a Gaussian mechanism with sensitivity 1. Therefore, following Theorem~\ref{theorem:rdp-gaussian}, $M_1$ with Gaussian noise of variance $\sigma_1^2$ guarantees $(\lambda, \lambda/2\sigma_1^2)$-RDP for all $\lambda > 1$. 

$M_2$ could be decomposed into post-processing a noisy histogram with Gaussian noise added to each dimension. Since each teacher model may increase the count in one bin and decrease the count in another, the mechanism has a sensitivity of 2. Therefore, $M_2$ with Gaussian noise of variance $\sigma_2^2$ guarantees $(\lambda, \lambda/\sigma_2^2)$-RDP~\citep{papernot2018scalable}.

The data-dependent privacy guarantee for the GNMax mechanism $M_2$ has been analyzed by~\cite{papernot2018scalable}:
\begin{theorem}
\label{theorem:gnmax}
Let $M$ be a randomized algorithm with $(\mu_1,\varepsilon_1)-$RDP and $(\mu_2,\varepsilon_2)-$RDP guarantees and suppose that there exists a likely outcome $i^{*}$ given
a dataset $D$ and a bound $\Tilde{q}\leq 1$ such that $\tilde{q} \geq \operatorname{Pr}\left[\mathcal{M}(D) \neq i^{*}\right]$. Additionally, suppose that $\lambda \leq \mu_{1}$ and $\tilde{q} \leq e^{\left(\mu_{2}-1\right) \varepsilon_{2}} /\left(\frac{\mu_{1}}{\mu_{1}-1} \cdot \frac{\mu_{2}}{\mu_{2}-1}\right)^{\mu_{2}}$. Then, for any neighboring dataset $D'$ of $D$, we have:
\begin{equation*}
D_{\lambda}\left(\mathcal{M}(D) \| \mathcal{M}\left(D^{\prime}\right)\right) \leq \frac{1}{\lambda-1} \log \left((1-\tilde{q}) \cdot \boldsymbol{A}\left(\tilde{q}, \mu_{2}, \varepsilon_{2}\right)^{\lambda-1}+\tilde{q} \cdot \boldsymbol{B}\left(\tilde{q}, \mu_{1}, \varepsilon_{1}\right)^{\lambda-1}\right),
\end{equation*}
where $\boldsymbol{A}\left(\tilde{q}, \mu_{2}, \varepsilon_{2}\right) \triangleq(1-\tilde{q}) /\left(1-\left(\tilde{q} e^{\varepsilon_{2}}\right)^{\frac{\mu_{2}-1}{\mu_{2}}}\right)$ and $\boldsymbol{B}\left(\tilde{q}, \mu_{1}, \varepsilon_{1}\right) \triangleq e^{\varepsilon_{1}} / \tilde{q}^{\frac{1}{\mu_{1}-1}}$.
\end{theorem}
The parameters $\mu_1$ and $\mu_2$ are optimized to get a data-dependent RDP guarantee for any order $\lambda$. By applying Theorem~\ref{theorem:gnmax} on $M_2$, we obtain the data-dependent RDP budget for $M_2$.

For any $\lambda>1$, suppose $\varepsilon_1$ is the RDP budget for $M_1$ and $\varepsilon_2$ is the data-dependent RDP budget for $M_2$. Then, the RDP budget for the \texttt{Confident-GNMax} algorithm could be calculated as follows:
\begin{equation*}
\begin{aligned}
  \varepsilon =\begin{cases}
               \varepsilon_1 \qquad &\text{if output is $\bot$,}\\
               \varepsilon_1 + \varepsilon_2 \qquad &\text{otherwise}.
            \end{cases}
\end{aligned}
\end{equation*}

% By Theorem~\ref{}, the \texttt{GNMax} mechanism ($M_1$) with Gaussian of variatnce $\sigma_1^2$ guarantees $(\lambda, \varepsilon)$-RDP for all $\lambda > 1$, where $\varepsilon=xxx$. By Theorem~\ref{}, the maximum noisy votes mechanism ($M_2$) with Gaussian of variatnce $\sigma_2^2$ guarantees $(\lambda, \lambda/2\sigma_2^2)$-RDP for all $\lambda > 1$. 

% Therefore, the \texttt{Confident-GNMax} aggregator guarantees $(\lambda, \varepsilon)$-RDP for all $\lambda > 1$, where (\yunhui{refer to theorem 6 in scalable pate})
% \begin{equation*}
%     \varepsilon = xxx 
% \end{equation*}.

% Hence, it satisfies $(\lambda, \lambda/2\sigma^2)$-RDP~\cite{mironov2017renyi}.

\section{Model Structures and Hyperparmeters}
\label{appendix:params}

All of our experiments are running on one AWS GPU server (G4dn.metal) with 8 NVIDIA Tesla T4 GPUs. 

\paragraph{\name{}.} For MNIST and Fashion-MNIST, the student generator consists of a fully connected layer with 1024 units and a deconvolutional layer with 64 kernels of size $5 \times 5$ (strides $2 \times 2$). 
Each teacher discriminator has a convolutional layer with 32 kernels of size $5 \times 5$ (strides $2 \times 2$) and a fully connected layer with 256 units. All layers are concated with the one-hot encoded class label. We apply batch normalization and \texttt{Leaky ReLU} on all layers. When $\varepsilon=10$, we train 2000 teacher discriminators with batch size of 30 and set $\sigma_1=600, \sigma_2=100$. When $\varepsilon=1$, we train 4000 teacher discriminators with batch size of 15 and set $\sigma_1=3000, \sigma_2=1000$. For Kaggle Credit dataset,  we train 2100 teacher discriminators with batch size of 32 and set $\sigma_1=1500, \sigma_2=600$. For all three datasets, we use Adam optimizer~\cite{kingma2014adam} with learning rate of $10^{-3}$ to train the models and clip the adversarial perturbations between $\pm 10^{-4}$. The consensus threshold $T$ is set to $0.5$.

\paragraph{GAN.} The structure of GAN is the same as the structure of \name{} with a single teacher discriminator. The hyper-parameters are also the same as \name{}.

\paragraph{DP-GAN.}
We use DP-GAN method mentioned in \cite{xie2018differentially} on both MNIST and FashionMNIST tasks. For the generator, we use FC Net structure with [128, 256, 512, 784] neurons in each layer, and the discriminator contains [784, 64, 64, 1] neurons in each layer. In each training epoch, the discriminator trains 5 steps and the generator trains 1 step. For both networks, $0.5 \times \texttt{ReLU}(\cdot)$ activation layers are used. Our batch size is 64 for each sampling, and sampling rate $q$ equals to $\frac{64}{6\times 10^4}$. We bound the discriminator's parameter weights to $[-0.1, 0.1]$ and kept feature's value between $[-0.5, 0.5]$ during the forward process. In order to generate specific digit data, we concat one-hot vector, which represents digits categories, into each layer in both the disciminator and the generator.

%\footnote{Code at \url{https://github.com/illidanlab/dpgan}}
%\footnote{Code at \url{https://bit.ly/3iZZbnx}}

\paragraph{Other Baselines.} We use the default open-source model architecture implementations and hyper-parameters for baselines: GS-WGAN\footnote{https://github.com/DingfanChen/GS-WGAN} and PATE-GAN\footnote{https://bit.ly/3iZZbnx}.

\paragraph{Classification Models for MNIST, Fashion-MNIST, and CelebA.} For each synthetic dataset, we trian a CNN for the classification task. The model has two convolutional layers with 32 and 64 kernels respectively. We use \texttt{ReLU} as the activation function and applies dropout on all layers. 

\paragraph{Classification Models for Kaggle Credit.} We implement 4 predictive models in~\cite{yoon2018pategan} using sklearn: Logistic Regression (\texttt{LogisticRegression}), Adaptive Boosting (\texttt{AdaBoostClassifier}), Bootstrap Aggregating (\texttt{BaggingClassifier}) and Multi-layer Perceptron (\texttt{MLPClassifier}). We use \emph{L1} penalty, \texttt{Liblinear} solver and \emph{(350:1)} class weight in Logistic Regression. We use logistic regression as classifier in Adaptive Boosting and Bootstrap Aggregating, setting \emph{L2} penalty, number of models as 200 and 100. For Multi-layer Perceptron, we use \texttt{tanh} as the activation of 3 layers with 18 nodes and Adam as the optimizer.
\section{Visual Quality Evaluation}
\label{appendix:visual}
We evaluate both Inception Score and Frechet Inception Distance for \name{} and baselines over MNIST, Fashion-MNIST and CelebA. 
We present the evaluation results in Table \ref{tab:full_image}. 

In our experiments, we follow GS-WGAN and use the implementation\footnote{\url{https://github.com/ChunyuanLI/MNIST_Inception_Score}} for Inception Score calculation with pretrained classifiers trained on real datasets (with test accuracy equal to $99\%, 93\%, 97\%$ on MNIST, Fashion-MNIST, and CelebA-Gender). 

Similarly, we follow GS-WGAN  and use the implementation\footnote{\url{https://github.com/google/compare_gan}} for FID calculation. A lower FID value indicates a smaller
discrepancy between the real and generated samples, which corresponds to a better sample quality
and diversity.

\begin{table*}[t!]
% \scriptsize
\centering
\caption{\small \textbf{Quality evaluation of images generated by different differentially private data generative models on Image Datasets:} 
Inception Score (IS) and Frechet Inception Distance (FID) are calculated to measure the visual quality of the generated data  under different $\varepsilon$ ($\delta=10^{-5}$). %
}
\begin{subtable}[h]{\linewidth}
\centering
\caption{$\varepsilon=1$}
{
\begin{tabular}{l|c|c|cccc}
\toprule
\textbf{Dataset} & Metrics &\textbf{Real data}  &\textbf{DP-GAN} & \textbf{PATE-GAN}   & \textbf{GS-WGAN}  &\textbf{G-PATE} \\ \midrule
\multirow{2}{*}{\textbf{MNIST}} & IS $\uparrow$ & 9.86 &  1.00 & 1.19  & 1.00  & \textbf{3.60}\\ % \cmidrule{3-7}
& FID $\downarrow$ & 1.04 & 470.20 & {231.54} &  489.75  & \textbf{153.38}  \\ \midrule
\multirow{2}{*}{\textbf{Fashion-MNIST}} & IS $\uparrow$ & 9.01 &  1.03 & 1.69  & 1.00 & \textbf{3.41}  \\ % \cmidrule{3-7}
& FID $\downarrow$ & 1.54 & 472.03 & 253.19  &  587.31  &  \textbf{214.78}  \\ \midrule
\multirow{2}{*}{\textbf{CelebA}} & IS $\uparrow$ & 1.88 &  1.00
& 1.15  & 1.00 & \textbf{1.17} \\ % \cmidrule{3-7}
& FID $\downarrow$ & 2.38 &  485.92 & 434.47  &  437.33  & \textbf{293.24}  \\ 
\bottomrule
\end{tabular}%
}
\end{subtable}

\vspace{0.8em}
\hfill

\begin{subtable}[h]{\linewidth}
\centering
\caption{$\varepsilon=10$}
{
\begin{tabular}{l|c|c|cccc}
\toprule
\textbf{Dataset} & Metrics &\textbf{Real data}  &\textbf{DP-GAN} & \textbf{PATE-GAN}  & \textbf{GS-WGAN}  &\textbf{G-PATE} \\ \midrule
\multirow{2}{*}{\textbf{MNIST}} & IS $\uparrow$ & 9.86 &  1.00
& 1.46  &  \textbf{8.59} & 5.16  \\ % \cmidrule{3-7}
& FID $\downarrow$ & 1.04 & 304.86 & {253.55}  &  \textbf{58.77} & 150.62   \\ \midrule
\multirow{2}{*}{\textbf{Fashion-MNIST}} & IS $\uparrow$ & 9.01 &  1.05 & 2.35  & \textbf{5.87}  & 4.33   \\ % \cmidrule{3-7}
& FID $\downarrow$ & 1.54 & 433.38 & 229.25  &  \textbf{135.47} & 171.90  \\ \midrule
\multirow{2}{*}{\textbf{CelebA}} & IS $\uparrow$ &  1.88 &  1.00
& 1.16  & 1.00 & \textbf{1.37} \\
& FID $\downarrow$ & 2.38 &  485.41  & 424.60 &  432.58  &  \textbf{305.92}   \\
\bottomrule
\end{tabular}%
}
\end{subtable}

% }
% \vspace{5pt}
\label{tab:full_image}
\vspace{-0.5em}
\end{table*}
\section{Running Time Analysis on G-PATE}
\yh{We record the running time of G-PATE on one Tesla T4 GPU under the best parameters (4000 teachers) on MNIST for 
$\varepsilon=1$ for three runs. In one epoch, G-PATE takes on average 213.56 seconds for generating fake samples (pre-processing of Phase I in algorithm1) and update parameters of each teacher discriminator (update teacher discriminator in Phase II). Then G-PATE takes on average 43.18 seconds to perform gradient quantization and aggregation (algorithm 2) and update the generator parameters.
Overall, G-PATE requires around 256.74 seconds to run for one epoch on MNIST and reaches the privacy budget of $\varepsilon=1$ at epoch 232, in total 16.5 hours given one single Tesla T4 GPU. In comparison, DP-GAN and PATE-GAN take around 26-34 hours to converge and GS-WGAN requires hundreds of GPU hours to pretrain one thousand non-private GAN as the warm-up steps.}

\section{Proofs}
\subsection{Proof of Lemma~\ref{theorem:dp_syn}}
\begin{proof}
Since $Z \in \mathcal{Z}$ are random points independent of the training dataset $\mathcal{D}$, generation of the synthetic dataset $X = G(Z)$ is post-processing process on the $(\varepsilon,\delta)$-differentially private data generative model $G$. Therefore, $X$ is $(\varepsilon,\delta)$-differentially private based on the post-processing theorem of differential privacy. 
\end{proof}

\subsection{Proof of Lemma~\ref{lemma:dpgradagg}}
\begin{proof}
According to the privacy guarantee of the \texttt{Confident-GNMax} aggregation method (Theorem~\ref{theorem:gnmax}), for each dimension $j$ in Algorithm~\ref{algo:gradAgg}, for any $\lambda>1$, there exists an $\varepsilon_j>0$ so that the \texttt{Confident-GNMax} aggregation method satisfies $(\lambda, \varepsilon_j)$-RDP. Since Algorithm~\ref{algo:gradAgg} performs \texttt{Confident-GNMax} aggregation over $k$ projected dimensions, the privacy guarantee of Algorithm~\ref{algo:gradAgg} can be derived from by composing the RDP budget over the $k$ dimensions. Therefore, based on the composition theorem of RDP (Theorem~\ref{theorem:rdpc}), Algorithm~\ref{algo:gradAgg} satisfies $\left(\lambda, \sum_{1\leq j\leq k} \varepsilon_{j}\right)$-RDP.
\end{proof}

\subsection{Proof of Lemma~\ref{theorem:rdp_budget}}
\begin{proof}
First, We apply Lemma~\ref{lemma:dpgradagg} to each training iteration in Algorithm~\ref{algo:trainGen}. For the convenience of privacy analysis, we divide each training iteration into three phases: pre-processing, private computation and aggregation, and post-processing. Based on Lemma~\ref{lemma:dpgradagg}, the private computation and aggregation phase is $\left(\lambda, \sum_{1\leq j\leq k} \varepsilon_{i,j}\right)$-RDP, where $\varepsilon_{i,j}$ is the data-dependent R\'enyi differential privacy for the \texttt{Confident-GNMax} aggregator in the $i$-th iteration on the $j$-th projected dimension. Since the pre-processing and post-processing phases do not access the private training dataset, these steps do not increase the RDP budget. Therefore, each training iteration in Algorithm~\ref{algo:trainGen} satisfies $\left(\lambda, \sum_{1\leq j\leq k} \varepsilon_{i,j}\right)$-RDP. 

Next, we compose the RDP budget over $N$ iterations. Based on the composition theorem of RDP (Theorem~\ref{theorem:rdpc}), Algorithm~\ref{algo:trainGen} satisfies $\left(\lambda, \sum_{1\leq i\leq N}\left(\sum_{1\leq j\leq k}\varepsilon_{i,j} \right)\right)$-RDP.
\end{proof}

% \subsection{Proof of Theorem~\ref{theorem:dp_budget}}
% \label{appendix:dp_proof}
% \begin{proof}
% According to Lemma~\ref{theorem:rdp_budget}, for any $\lambda>1$, there exists an $\varepsilon>0$ so that the student generator $G$ trained by Algorithm~\ref{algo:trainGen} is $(\lambda, \varepsilon)$-RDP.  Therefore, based on Theorem~\ref{theorem:rdp-dp}, $G$ is $(\varepsilon + \frac{\log 1/\delta}{\lambda-1}, \delta)$-differential privacy for any $\delta \in (0,1)$.
% \end{proof}

\end{document}